\newcommand{\astar}{A*\ }
\newcommand{\loss}{\ensuremath{\mathrm{L}}}
\newcommand{\starloss}{\ensuremath{\mathrm{L}^*}}
\newcommand{\gbfsloss}{\ensuremath{\mathrm{L}_{\mathrm{gbfs}}}}
\newcommand{\lrtloss}{\ensuremath{\mathrm{L}_{\mathrm{rt}}}}
\newcommand{\bellmanloss}{\ensuremath{\mathrm{L}_{\mathrm{be}}}}
\newcommand{\levinloss}{\ensuremath{\mathrm{L}_{\mathrm{le}}}}
\newcommand{\hstar}{\ensuremath{h^*}}
\newcommand{\trnset}{\ensuremath{\mathcal{T}^{\mathrm{trn}}}}
\newtheorem{example}{Example}
\newtheorem{theorem}{Theorem}
\newtheorem{definition}{Definition}
\title{Optimize Planning Heuristics to Rank, not to Estimate Cost-to-Goal}
\author{%
  Leah Chrestien\\
  Czech Technical University in Prague\\
  \texttt{leah.chrestien@aic.fel.cvut.cz} \\
   \And
    Tom\'{a}\u{s} Pevn\'{y} \\
    Czech Technical University in Prague, \\
   and Gen Digital, Inc. \\
   \texttt{pevnytom@fel.cvut.cz} \\
   \AND
   Stefan Edelkamp \\
   Czech Technical University in Prague\ \\
   \texttt{edelkste@fel.cvut.cz} \\
   \And
    Anton\'{i}n Komenda \\
  Czech Technical University in Prague \\
   \texttt{antonin.komenda@fel.cvut.cz} \\
}
\begin{document}

\maketitle

\begin{abstract}
In imitation learning for planning, parameters of heuristic functions are optimized against a set of solved problem instances. This work revisits the necessary and sufficient conditions of strictly optimally efficient heuristics for forward search algorithms, mainly A* and greedy best-first search, which expand only states on the returned optimal path. It then proposes a family of loss functions based on ranking tailored for a given variant of the forward search algorithm. Furthermore, from a learning theory point of view, it discusses why optimizing cost-to-goal \hstar\ is unnecessarily difficult. The experimental comparison on a diverse set of problems unequivocally supports the derived theory. 
\end{abstract}

\section{Introduction}
Automated planning finds a sequence of actions that will reach a goal in a model of the environment provided by the user. It is considered to be one of the core problems in Artificial Intelligence and it is behind some of its successful applications~\cite{samuel1967some,knuth1975analysis,silver2017mastering}. Early analysis of planning tasks~\cite{McDermott96} indicated that optimizing the heuristic function steering the search for a given problem domain towards the goal can dramatically improve the performance of the search. Automated optimization of the heuristic function therefore becomes a central request in improving the performance of planners~\cite{bonet2001planning,thebook}. 

Learning in planning means optimizing heuristic functions from plans of already solved problems and their instances. This definition includes the selection of proper heuristics in a set of pattern databases~\cite{FrancoTLB17,HaslumBHBK07,MoraruEFM19,Edelkamp06}, a selection of a planner from a portfolio~\cite{katz2018delfi}, learning planning operators from instances~\cite{MenagerCRA18,Wang94}, learning for macro-operators and entanglements~\cite{Chrpa10,korf1985macro}, and learning heuristic functions by general function approximators (e.g. neural networks)~\cite{shen2020learning,groshev2017learning,ferber2020neural,bhardwaj2017learning}. 

The majority of research~\cite{shen2020learning,toyer2020asnets,groshev2017learning,ferber2020neural} optimizes the heuristic function to estimate the cost-to-goal, as it is well known that it is an optimal heuristic for many best-first heuristic search algorithms including A* or IDA*~\cite{hart1968formal,DBLP:conf/ijcai/Korf85}. These works optimize the heuristic function to solve \emph{regression problems}.  But even a true cost-to-goal \hstar{} does not guarantee that the \emph{forward search} will find the optimal solution while expanding the minimal number of states. This paper defines a stricter version of \emph{optimally efficiency}~\cite{DechterP83,DBLP:conf/aaai/HolteZ19} as follows. \emph{Forward search} (or its heuristic) is called \emph{strictly optimally efficient} iff it expands \emph{only} states on one optimal solution path returned by the search, i.e., if this optimal solution path has $l+1$ states, the search will expand only $l$ states. 

This work focuses exclusively on a \emph{forward search} with merit functions. It presents theoretical arguments as to why the order (rank) of states provided by the heuristic function is more important than the precise estimate of the cost-to-goal and it formalizes the necessary and sufficient condition for strictly optimally efficient search. It also argues why learning to rank is a simpler problem than regression estimating the cost-to-goal from a statistical learning theory point of view. The main motivation of the paper is similar to that in~\cite{orseau2021policy}, but the derived theory and implementation are much simpler. The sufficiency of optimizing the rank has been already recognized in~\cite{xu2009learning} for a beam-search and in~\cite{reed2016learning} for greedy best-first search (GBFS), though, unlike this work, it cannot guarantee strict optimal efficiency. 

We emphasize that this work neither deals with nor questions the existence of a strictly optimally efficient heuristic function with the forward search. It is assumed that the space of possible heuristic functions is sufficiently large, such that there exist one that is sufficiently good. The theoretical results are supported by experimental comparisons to the prior art on eight domains.  

The contributions of this paper are as follows.
\begin{enumerate}
    \item We state necessary and sufficient conditions for strictly optimally efficient heuristic functions for forward search algorithms in deterministic planning.
	\item We show that when optimizing a heuristic function, it is better to solve the ranking problem.
	\item We argue, why learning to rank is easier than regression used in learning the cost-to-goal.
	\item We instantiate it for A* and GBFS searches leading to two slightly different loss functions.
	\item We experimentally demonstrate on eight problems (three grid and five PDDL) that optimizing rank is \emph{always} better.
\end{enumerate}

\section{Preliminaries}
\label{sec:preliminaries}
A search problem instance is defined by a directed weighted graph $\Gamma = \langle \mathcal{S}, \mathcal{E}, w \rangle$, a distinct node $s_0\in\mathcal{S}$ and a distinct set of nodes $\mathcal{S}^*\subseteq\mathcal{S}$. The nodes $\mathcal{S}$ denote all possible states $s\in\mathcal{S}$ of the underlying transition system representing the graph. The set of edges $\mathcal{E}$ contains all possible transitions $e\in\mathcal{E}$ between the states in the form $e = (s, s')$. $s_0 \in \mathcal{S}$ is the initial state of the problem instance and $\mathcal{S}^* \subseteq \mathcal{S}$ is a set of allowed goal states.  Problem instance graph weights (alias action costs) are mappings $w : \mathcal{E} \rightarrow \mathbb{R}^{\ge 0}$.

A path (alias a plan) $\pi=({e}_1, {e}_2, \ldots, {e}_l),$ $e_i = (s_{i-1},s_i) \in \mathcal{E},$ of length $l$ solves a task $\Gamma$ with $s_0$ and $\mathcal{S}^*$ iff $\pi=((s_0, s_1), (s_1, s_2), \ldots, (s_{l-1}, s_l))$ and $s_l \in \mathcal{S}^*$. An optimal path $\pi^*$ is defined as a path minimizing the cost of a problem instance $\Gamma, s_0, \mathcal{S}^*,$ its value as $f^*= w(\pi^*) = \sum_{i=1}^{l} w({e}_i)$. When cost function $w(e) = 1$ optimal path corresponds to the shortest one.
$\pi_{:i} = ((s_0,s_1),(s_1,s_2),\ldots,(s_{i-1},s_i))$ denotes subplan of a plan $\pi$ containing its first $i$ actions. $\mathcal{S}^{\pi}=\{s_i\}_{i=0}^l$ denotes the set of states from the path $\pi = ((s_0,s_1),(s_1,s_2),\ldots,(s_{l-1},s_l)),$ and correspondingly $\mathcal{S}^{\pi_{:i}}$ denotes the set of states in a subplan $\pi_{:i}.$

\subsection{Forward search algorithm}
To align notation, we briefly review the forward search algorithm, of which A* and GBFS are special cases. As we apply full duplicate detection in all of our algorithms, there is a bijection of search tree nodes to planning states. Forward Search for consistent heuristics, where $h(s) - h(s') \leq w(s,s')$ for all edges $(s,s')$ in the $w$-weighted state space graph, mimics the working of Dijkstra's shortest-path algorithm~\cite{Dijkstra59}. It maintains two sets: the first called \emph{Open list}, $\mathcal{O},$  contains generated but not expanded nodes; the second called \emph{Closed list}, $\mathcal{C},$ contains already expanded nodes. Parameters $\alpha$ and $\beta$ of a merit function $f(s) = \alpha g(s) + \beta h(s)$ used to sort states in Open list allow to conveniently describe different variants of the algorithm as described in detail shortly below. The forward search works as follows.
\begin{enumerate}
	\item Initialise Open list as $\mathcal{O}_0 = \{s_0\}$.
    \item Set $g(s_0) = 0$
	\item Initiate the Closed list to empty, i.e.  $\mathcal{C}_0 = \emptyset.$ 
	\item For $i \in 1,\ldots$ until $\mathcal{O}_i = \emptyset$
	\begin{enumerate}
		\item Select the state $s_i = \arg \min_{s\in \mathcal{O}_{i-1}} \alpha g(s) + \beta h(s)$
		\item Remove $s_i$ from $\mathcal{O}_{i-1},$ $\mathcal{O}_{i} = \mathcal{O}_{i-1} \setminus \{s_i\}$
		\item If $s_i \in \mathcal{S}^*$, i.e. it is a goal state, go to 5.
		\item Insert the state $s_i$ to $\mathcal{C}_{i-1}$, $\mathcal{C}_i = \mathcal{C}_{i-1} \cup \{s_i\}$
		\item Expand the state $s_i$ into states $s'$ for which hold $(s_i, s') \in \mathcal{E}$ and for each
		\begin{enumerate}
		    \item set $g(s') = g(s_i) + w(s_i, s')$
		    \item if $s'$ is in the Closed list as $s_c$ and $g(s') < g(s_c)$ then $s_c$ is reopened (i.e., moved from the Closed to the Open list), else continue with (e)
		    \item if $s'$ is in the Open list as $s_o$ and $g(s') < g(s_o)$ then $s_o$ is updated (i.e., removed from the Open list and re-added in the next step with updated $g(\cdot)$), else continue with (e)
		    \item add $s'$ into the Open list
		\end{enumerate}
	\end{enumerate}
	\item Walk back to retrieve the solution path.
\end{enumerate}

In the above algorithm, $g(s)$ denotes a function assigning an accumulated cost $w$ for moving from the initial state ($s_0$) to a given state $s$. During its execution, it always expands nodes with the lowest $f(s) = \alpha g(s) + \beta h(s).$ Different settings of $\alpha$ and $\beta$ give rise to different algorithms: for A*, $\alpha = \beta = 1$; for GBFS $\alpha = 0$ and $\beta = 1$.

\emph{Consistent heuristics}, which are of special interest, are called monotone because the estimated cost of a partial solution $f(s)=g(s)+h(s)$ is monotonically non-decreasing along the best path to the goal.
More than this, $f$ is monotone on all 
edges $(s,s')$, if and only if
$h$ is consistent as we have
$f(s') = g(s') + h(s') \ge g(s) + w(s,s') + h(s) - w(s,s') =
f(s).$  
For the case of consistent heuristics, no reopening (moving back nodes from Closed to Open) in A* is needed, as we essentially traverse a state-space graph with edge weights $w(s,s') + h(s') - h(s) \ge 0$. For the trivial heuristic $h_0$, we have $h_0(s)=0$ and for perfect heuristic $h^*$, we have $f(s)=f^*=g(s)+h^*(s)$
for all nodes  $s$. Both heuristics $h_0$ and $h^*$ are consistent. For GBFS and other variants, reopening is usually neglected. Even if the heuristic is not consistent, best-first algorithms without the reopening, remain complete i.e., they find a plan if there is one. Plans might not be provably optimal but are often good for planning practice~\cite{FELNER20111570}.

\section{Conditions on strictly optimally efficient heuristic}
Let $\mathcal{O}_i$ be an Open list in the $i^{\mathrm{th}}$ iteration of the forward search expanding only states on the optimal path $\pi.$ Below definition defines a \emph{perfect ranking heuristic} as a heuristic function where state on some optimal path in the Open list, $\mathcal{S}^{\pi_{:i}} \cap \mathcal{O}_i,$ has \emph{always} strictly lower merit value than other states in the Open list off the optimal path, $\mathcal{O}_i \setminus \mathcal{S}^{\pi_{:i}}.$ The intersection $\mathcal{S}^{\pi_{:i}} \cap \mathcal{O}_i$ contains always exactly one state.  
\label{sec:theory}
\begin{definition}[Perfect ranking heuristic]
\label{def:perfect}
A heuristic function $h(s)$ is a \textbf{perfect ranking} in forward search with a merit function $f(s) = \alpha g(s) + \beta h(s)$ for a problem instance $\Gamma = (\langle \mathcal{S}, \mathcal{E}, w \rangle, s_0, \mathcal{S}^*)$ if and only if there exists an optimal plan $\pi = ((s_0,s_1),(s_1,s_2),\ldots,(s_{l-1},s_l))$ such that 
\begin{itemize}
	\item $g(s)$ is the cost from $s_0$ to $s$ in a search-tree created by expanding only states on the optimal path $\pi;$
	\item $\forall i \in \{1,\ldots,l\}$ and $\forall s_j \in \mathcal{O}_i \setminus \mathcal{S}^{\pi_{:i}}$ we have $f(s_j) > f(s_i).$
\end{itemize}
\end{definition}

As an example consider a search-tree in Figure~\ref{fig:searchtree} with an optimal path $((s_0,s_1),(s_1,s_2),(s_2,s_3))$ and states $\{s_4, s_5, s_6,s_7\}$ off the optimal path. Then the Open list after expanding $s_0$ is $\mathcal{O}_1 = \{s_1,s_4,s_5\}$ and that after expanding $s_1$ is $\mathcal{O}_2 = \{s_2,s_4,s_5,s_6,s_7\}.$ The set of states on the optimal sub-path are $\mathcal{S}^{\pi_{:1}} = \{s_0,s_1\},$ $\mathcal{S}^{\pi_{:2}} = \{s_0,s_1,s_2\}.$ States on the optimal path in the Open lists are $\mathcal{S}^{\pi_{:1}} \cap \mathcal{O}_1 = \{s_1\}$ and $\mathcal{S}^{\pi_{:2}} \cap \mathcal{O}_2 =  \{s_2\}.$ States off the optimal path in the Open lists are $\mathcal{O}_1 \setminus \mathcal{S}^{\pi_{:1}} = \{s_4,s_5\}$ and $\mathcal{O}_2 \setminus \mathcal{S}^{\pi_{:2}} = \{s_4,s_5,s_6,s_7\}.$ Function $f$ is perfectly ranking problem instance in Figure~\ref{fig:searchtree} iff following inequalities holds: 
\begin{alignat*}{6}
f(s_1) & < f(s_4),\quad\quad  & f(s_2) & < f(s_4),\quad\quad & f(s_2) & < f(s_6),\\
f(s_1) & < f(s_5),\quad\quad  & f(s_2) & < f(s_5),\quad\quad & f(s_2) & < f(s_7).
\end{alignat*}
Notice that the heuristic values of states on the optimal path are never compared. This is because if the forward search expands only states on the optimal path, then its Open list always contains only one state from the optimal path. The definition anticipates the presence of multiple optimal solutions and even multiple goals, which requires the heuristic to break ties and have a perfect rank with respect to one of them.

\begin{theorem}
\label{th:necsuff}
The forward search with a merit function $f(s) = \alpha	g(s) + \beta h(s) $ and a heuristic $h$ is strictly optimally efficient on a problem instance $(\langle \mathcal{S}, \mathcal{E}, w \rangle, s_0, \mathcal{S}^*)$ if and only if $h$ is a perfect ranking on it.
\end{theorem}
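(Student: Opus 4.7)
The plan is to prove both implications by induction on the iteration count of the forward search, simultaneously tracking the set of expanded states and checking that the internally computed $g$-values coincide with those in Definition~\ref{def:perfect}. This coincidence is self-consistent: once we know the algorithm has expanded exactly $\{s_0,\ldots,s_{i-1}\}\subseteq\mathcal{S}^\pi$, both the run of the algorithm and the hypothetical execution used to define $g$ relax precisely the same edges out of precisely the same nodes, so the $g$-condition of Definition~\ref{def:perfect} holds automatically and the only remaining question is which state step~4.a selects next.

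For the ($\Leftarrow$) direction, I assume $h$ is perfect ranking. The base case is trivial since $\mathcal{O}_0=\{s_0\}$ forces $s_0$ to be selected first. Inductively, if after $i$ expansions the expanded set is exactly $\{s_0,\ldots,s_{i-1}\}$, then the current Open list coincides with $\mathcal{O}_i$ from Definition~\ref{def:perfect}, it contains $s_i$ as the unique element of $\mathcal{S}^{\pi_{:i}}\cap\mathcal{O}_i$, and the strict inequality $f(s_j)>f(s_i)$ for all $s_j\in\mathcal{O}_i\setminus\mathcal{S}^{\pi_{:i}}$ forces step~4.a to pick $s_i$ uniquely, regardless of tie-breaking. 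After $l$ expansions step~4.c catches the goal $s_l$ and halts, so the search expanded exactly the $l$ non-goal states of $\pi$.

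For the ($\Rightarrow$) direction, I assume the search is strictly optimally efficient along some optimal path $\pi=(s_0,\ldots,s_l)$. The $g$-condition is again automatic. For the ranking condition, at each iteration the algorithm's $\arg\min$ picks the next path state $s_i$ from $\mathcal{O}_i$, and must continue to do so under every admissible tie-breaking rule; otherwise an adversarial tie-break could select some off-path $s_j\in\mathcal{O}_i\setminus\mathcal{S}^{\pi_{:i}}$ with $f(s_j)\le f(s_i)$, expand it, and violate strict optimal efficiency. Hence $f(s_j)>f(s_i)$ for every such $s_j$, which is exactly the required ranking condition.

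The main obstacle I expect is precisely this tie-breaking subtlety in the reverse direction: Definition~\ref{def:perfect} uses strict $>$ while $\arg\min$ only guarantees $\le$, so the equivalence really requires reading ``strictly optimally efficient'' as robust against worst-case tie-breaking among equal-merit states. A smaller bookkeeping point is that $\mathcal{O}_i$ generally contains off-path nodes inserted as successors of earlier path states $s_0,\ldots,s_{i-2}$, not just fresh successors of $s_{i-1}$; but since Definition~\ref{def:perfect} quantifies over all of $\mathcal{O}_i\setminus\mathcal{S}^{\pi_{:i}}$ at every $i$, these older off-path states are automatically covered and require no separate treatment.
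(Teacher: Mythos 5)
Your proof is correct and follows essentially the same route as the paper's, which gives only a two-sentence verbal version of the same argument: the strict ranking inequality forces the $\arg\min$ in step~4.a to select the next on-path state, and conversely. Your explicit induction on the expanded set and your observation that the necessity direction requires reading ``strictly optimally efficient'' as robust to adversarial tie-breaking (since the definition demands strict $>$ while $\arg\min$ alone only yields $\le$) are refinements the paper leaves implicit, but they do not change the underlying argument.
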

 
\begin{proof}
\emph{Sufficiency:} If the conditions of a perfect ranking heuristic hold, then there exists an optimal path such that a state on it that is in the Open list always has the strictly lowest heuristic value. Therefore, forward search will never move off the optimal path and is, therefore, strictly optimally efficient.

\emph{Necessity:} If $h$ is a strictly optimally efficient heuristic, then the forward search always selects the state on the optimal path, which means that the state has the lowest heuristic value of all the states in the Open list, which is precisely the condition of a perfect ranking heuristic.
\end{proof}

Theorem~\ref{th:necsuff}, despite being trivial, allows certifying that the forward search with a given heuristic in a given problem instance is strictly optimally efficient. This property has been discussed in~\cite{xu2009learning} for Beam search, but the conditions stated there are different because Beam search prunes states in the Open list. Recall that the complexity class of computing a perfect ranking heuristic function is the same as solving the problem because one implies the other. We now remind the reader that the popular cost-to-goal does not always lead to a strictly optimally efficient forward search.

\begin{figure}[t]
\begin{subfigure}[t]{0.38\textwidth}
\begin{center}
\tikzset{every picture/.style={line width=0.75pt}} 
\tikzstyle{optimal}=[circle,fill=black!25,minimum size=20pt,inner sep=0pt]
\tikzstyle{nonoptimal} = [optimal, fill=red!24]
\tikzstyle{nonvisited} = [optimal, fill=yellow!80]
\tikzstyle{edge} = [draw,->,thin]
\tikzstyle{selected edge} = [draw,thick,->]
\begin{tikzpicture}[scale=0.9,
]
\foreach \pos/\label/\name in {
  {(-1,4)/s4/$s_4$},
  {(1,4)/s5/$s_5$},
  {(-1,3)/s6/$s_6$},
  {(1,3)/s7/$s_7$}} \node[nonoptimal] (\label) at \pos {\name};

\foreach \pos/\label/\name in {
  {(0,5)/s0/$s_0$},
  {(0,4)/s1/$s_1$},
  {(0,3)/s2/$s_2$},
  {(0,2)/s3/$s_3$}} \node[optimal] (\label) at \pos {\name};

\foreach \source/\dest in {
  {s0/s4},
  {s0/s5},
  {s1/s6},
  {s1/s7}} \path[edge] (\source) -- (\dest);

\foreach \source/\dest in {
  {s0/s1},
  {s1/s2},
  {s2/s3}} \path[selected edge] (\source) -- (\dest);
\end{tikzpicture}
\subcaption{\label{fig:searchtree}An example of a search tree created by expanding only states on the optimal path $(s_0, s_1, s_2, s_3).$ Grey nodes denote states on the optimal path and pink nodes denote states off the optimal path.}
\end{center}
\end{subfigure}
\hspace{0.05\textwidth}
\begin{subfigure}[t]{0.5\textwidth}
\begin{center}
\tikzset{every picture/.style={line width=0.75pt}} 
\begin{tikzpicture}[scale = 0.4,
roundnode/.style={circle, draw=black!60, very thick, minimum size=7mm},
roundnodeb/.style={circle, draw=black!60, very thick, minimum size=9mm},
]
\node[roundnode] (a) at (0,0) {A};
\node[red] at (0,1.4) {10};
\node[roundnode] (b) at (6,2.5) {B};
\node[red] at (6,3.8) {3};
\node[roundnode] (c) at (3,-2.5) {C};
\node[red] at (3,-1.1) {8};
\node[roundnode] (d) at (9,-2.5) {D};
\node[red] at (9,-1.1) {4};
\node[roundnodeb] (e) at (12,0) {};
\node[red] at (12,1.6) {0};
\node[roundnode] at (12,0) {E};
\draw[very thick, ->] (a.south east) -- node[anchor=north east] {2} (c.north west);
\draw[very thick, ->] (a.north east) -- node[anchor=south east] {8} (b.west);
\draw[very thick, ->] (c.east) -- node[anchor=north] {4} (d.west);
\draw[very thick, ->] (b.east) -- node[anchor=south west] {3} (e.north west);
\draw[very thick, ->] (d.north east) -- node[anchor=north west] {4} (e.south west);
\end{tikzpicture}
\subcaption{\label{fig:perfectheuristic}Problem instance where perfect heuristic is not strictly optimally efficient with GBFS. Numbers on the edges denote the cost of action and red numbers next to nodes denote the minimal cost-to-goal.}
\end{center}
\end{subfigure}
\end{figure}
\begin{example}
\label{ex:optimality}
While cost-to-goal \hstar\ is the best possible heuristic for algorithms like A* (up to tie-breaking)  in terms of nodes being expanded, for GBFS, \hstar{} does not necessarily yield optimal solutions. 
\end{example}
See Figure~\ref{fig:perfectheuristic} for a counterexample. The complete proof is in the Appendix. Appendix~\ref{sec:optimality_of_gbfs} also gives an example of a problem instance, where an optimally efficient heuristic for GBFS returning the optimal solution does not exist.

\section{Loss functions}
\subsection{Ranking loss function}
Let's now design a loss function minimizing the number of violated conditions in Definition~\ref{def:perfect} for a problem instance $(\Gamma, s_0, \mathcal{S}^*)$ and its optimal plan $\pi.$ Assuming a heuristic function $h(s,\theta)$ with parameters $\theta\in \mathbf{\Theta},$ the number of violated conditions can be counted as
\begin{equation}
\mathrm{L}_{01}(h,\Gamma, \pi) = \sum_{s_i \in \mathcal{S}^{\pi}}\sum_{s_j \in  \mathcal{O}_i \setminus \mathcal{S}^{\pi_{:i}}} \llbracket r(s_i,s_j,\theta) > 0 \rrbracket,
\label{eq:loss01}
\end{equation}
where
\begin{equation}
r(s_i,s_j,\theta) = \alpha (g(s_i) - g(s_j)) + \beta (h(s_i,\theta) - h(s_j,\theta)),
\label{eq:rfun}
\end{equation}
$\mathcal{O}_i,$ $\mathcal{S}^{\pi_{:i}}$ as used in Definition~\ref{def:perfect},  $\llbracket \cdot \rrbracket$ denotes Iverson bracket being one if the argument is true and zero otherwise, and $\alpha$ and $\beta$ are parameters controlling the type of search, as introduced above.

In imitation learning, we assume that we have a training set \trnset{} consisting of tuples of problem instances and optimal plans $\trnset = \{\left(\langle \mathcal{S}_i, \mathcal{E}_i, w_i \rangle, s_{0,i}, \mathcal{S}_i^*, \pi_i\right)\}_{i=1}^n.$ To optimize parameters $\theta$ of a heuristic function $h(s,\theta),$ we propose to minimize the number of wrongly ranked states over all the problem instances in the training set, 
\begin{equation}
\arg \min_{\theta \in \mathbf{\Theta}} \sum_{(\Gamma, \pi )\in\trnset} \mathrm{L}_{01}(h, \Gamma, \pi). 	
\label{eq:train+loss01}
\end{equation} 
In practice, one often wants to solve problem~\eqref{eq:train+loss01} by efficient gradient optimization methods. To do so, the Iverson bracket $\llbracket \cdot \rrbracket$ (also called 0-1 loss) is usually replaced by a convex surrogate such as the hinge-loss
or the logistic loss used in the Experimental section below, in which case Equation~\eqref{eq:train+loss01} becomes
\begin{equation}
\arg \min_{\theta \in \mathbf{\Theta}} \sum_{(\Gamma, \pi )\in\trnset} \sum_{s_i \in \mathcal{S}^{\pi}}\sum_{s_j \in \bar{\mathcal{S}}^i} \log(1 + \exp(r(s_i,s_j,\theta))).
\label{eq:logistic}
\end{equation} 
While optimization of the surrogate loss might look as if one optimizes a different problem, it has been shown that for certain classes of functions, optimization of surrogate losses solves the original problem~\eqref{eq:train+loss01} as the number of training samples approach infinity~\cite{steinwart2005consistency,nguyen2009surrogate} .

\subsection{Regression loss function}
For the sake of comparison, we review a loss function used in the majority of works optimizing the cost-to-goal in imitation learning for planning (some notable exceptions are in the related work section). The optimization  problem with $\loss_2$ loss function is defined as 
\begin{equation}
\arg \min_{\theta \in \mathbf{\Theta}} \sum_{(\Gamma, \pi )\in\trnset} \sum_{s_i \in \mathcal{S}^{\pi}} (h(s_i,\theta) - h^*(s_i)^2,  
\label{eq:costtogo}	
\end{equation}
where $h^*(s_i)$ is the true cost-to-goal.  The optimal solution of the optimization is the
heuristic function $\hstar.$ What is important for the discussion below is that the $\loss_2$ loss function uses \emph{only states on the optimal path} and the heuristic function is optimized to solve the \emph{regression} problem. Other variants of regression loss, such as asymmetric $\loss_1$ in~\cite{takahashi2019learning}, do not solve issues of regression losses discussed in this paper.

\subsection{Advantages and disadvantages of loss functions}
\label{sub:properties}
\paragraph{Cost-to-goal provides a false sense of optimality} It has been already mentioned above that \astar{} with \hstar{} is strictly optimally efficient up to resolving states with the same value of $g(s) + \hstar(s).$ This means that for problems with a large (even exponential) number of optimal solutions of equal cost (see~\cite{HelmertR08} for examples), \astar{} will be very inefficient unless some mechanism for breaking ties is used. Moreover, since learning is inherently imprecise, a heuristic close but not equal to $h^*$ will likely be less efficient than the one close to a perfect-ranking heuristic.

\paragraph{Size of the solution set}
The set of all functions satisfying Equation~\eqref{eq:costtogo} is likely smaller than that satisfying Equation~\eqref{eq:train+loss01}, because the solution set of~\eqref{eq:train+loss01} is invariant to transformations by \emph{stricly monotone} functions, unlike the solution set of~\eqref{eq:costtogo}. From that, we conjecture that finding the solution of~\eqref{eq:costtogo} is more difficult. The precise theorem is difficult to prove since the solution of~\eqref{eq:costtogo} is not a solution of~\eqref{eq:train+loss01} due to tie resolution in Definition~\ref{def:perfect}.

\paragraph{Optimizing cost-to-goal does not utilize states off the solution path}  Assume a problem instance $(\langle \mathcal{S},\mathcal{E},w\rangle, s_0, \mathcal{S}^*)$ and an optimal solution path $\pi=((s_0,s_1),(s_1,s_2),\ldots,(s_{l-1}, s_l)).$ Loss function optimized in~\eqref{eq:costtogo} uses only states $\{s_i\}_{i=0}^{l}$ on the optimal path. This means that even when the loss is zero, the search can be arbitrarily inefficient if states $\mathcal{S} \backslash \mathcal{S}^{\pi}$ off the optimal path will have heuristic values smaller than $\min_{s \in \mathcal{S}^\pi} h(s).$ The proposed ranking loss~\eqref{eq:train+loss01} uses states on and off the optimal path and therefore, does not suffer from this problem. 
This issue can be fixed if the training set is extended to contain heuristic values for all states off the optimal path (states $\{s_4,s_5,s_6,s_7\}$ in Figure~\ref{fig:searchtree}), as has been suggested in~\cite{bhardwaj2017learning,staahlberg2022learning}. With respect to the above definitions, this is equal to adding more problem instances to the training set sharing the same graph $\Gamma$ and set of goal states $\mathcal{S}^*,$ but differing in initial states. Solving all of them to obtain true cost-to-goal \emph{greatly increases the cost of creating the training set}.

\paragraph{Heuristic value for dead-end states} should be sufficiently high to ensure they are never selected. An infinity in the $\loss_2$ loss would always lead to an infinite gradient in optimization. In practice, the $\infty$ is replaced by a sufficiently large value, but too large values can cause large gradients of $\loss_2,$ which might negatively impact the stability of convergence of gradient-based optimization methods. Proposed ranking losses do not suffer this problem.

\paragraph{Goal-awareness} The drawback of the heuristic function $h(s,\theta)$ obtained by optimizing ranking losses is that they are not goal-aware unlike the heuristic optimizing the estimate of cost-to-goal. 

\paragraph{Speed of convergence against the size of training set}
From classical bounds on true error~\cite{vapnik1991principles} and~\cite{cherkassky1999model}, we can derive (see Appendix for details) that the excess error of ranking loss converges to zero at a rate $\frac{\sqrt{\ln n_p}}{\sqrt{n_p}},$ which is slightly faster than that of regression $\frac{\sqrt{\ln n_s}}{\sqrt{n_s} - \sqrt{\ln n_s}},$ where $n_s$ and $n_p$ denotes the number of states and state pairs respectively in the training set. But, the number of state pairs in the training set grows quadratically with the number of states; therefore, the excess error of ranking loss for the number of states $n_s$ can be expressed as $\frac{\sqrt{2\ln n_s}}{n_s},$ which would be by at least $\frac{1}{\sqrt{n_s}}$ factor faster than that of regression. Note though that these rates are illustrative since the independency of samples (and sample pairs) is in practice, violated since even samples from the same problem instance are not independent.

\paragraph{Conflicts in training set} Unlike regression loss, the proposed family of ranking losses is potentially sensitive to conflicts in the training set, which occur when the training set contains examples of two (or more) different solution paths of \emph{the same} problem instance. This might prevent achieving a zero loss. The effect on the heuristic depends on the composition of the training set. Either solution paths more frequent in the training set will be preferred, or there will be ties, but their resolution do not affect optimally efficiency. Appendix~\ref{sec:multiplepath} provides an illustrative example.

\section{Related Work}
\label{sec:related}
The closest work to ours is~\cite{orseau2018single,orseau2021policy}, which adjusts heuristic value by a policy estimated from the neural network. The theory therein has the same motivation, to minimize the number of expanded states, though it is more complicated than the theory presented here. For efficiency, \cite{orseau2021policy}~needs to modify the GBFS such that search nodes store policy and probability of reaching the node from the start. Moreover, in some variants (used in the experimental comparison below) the neural network has to estimate both heuristic value and policy.

Inspired by statistical surveys of heuristic functions in~\cite{wilt2012does,wilt2016effective}, \cite{reed2016learning}~proposes to optimize the rank of states preserving order defined by the true cost-to-goal. The formulation is therefore valid only for GBFS and not for A* search. Ties are not resolved, which means the strict optimal efficiency cannot be guaranteed. Similarly, heuristic values for domains where all actions have zero cost cannot be optimized. Lastly and importantly, the need to know the true cost-to-goal for all states in the training set greatly increases the cost of its construction as discussed above. From Bellman's equation~\cite{staahlberg2022learning} arrives at the requirement that the smallest heuristic value of child nodes has to be smaller than that of the parent node. The resulting loss is regularized with $\loss_1$ forcing the value of the heuristic to be close to cost-to-goal.

In~\cite{bhardwaj2017learning}, \astar is viewed as a Markov Decision Process with the Q-function equal to the number of steps of \astar reaching the solution. This detaches the heuristic values from the cost-to-goal cost, but it does not solve the problem with dead-ends and ties, since the optimization is done by minimizing $\loss_2$ loss. 

Symmetry of $\loss_2$ (and of $\loss_1$) loss are discussed in~\cite{takahashi2019learning}, as it does not promote the admissibility of the heuristic. It suggests asymmetric $\loss_1$ loss with different weights on the left and right parts, but this does not completely solve the problem of estimating cost-to-goal (for example with ties, dead-ends).

Parameters of potential heuristics~\cite{seipp2015new} are optimized by linear programming for each problem instance to meet admissibility constraints while helping the search. On contrary, this work focuses on the efficiency of the search for many problem instances but cannot guarantee admissibility in general. 

Combining neural networks with discrete search algorithms, such that they become an inseparable part of the architecture is proposed in~\cite{vlastelica2021neuro} and~\cite{yonetani2021path}. The gradient with respect to parameters has to be propagated through the search algorithm which usually requires both approximation and the search algorithm to be executed during every inference.

A large body of literature~\cite{silver2017mastering,guez2018learning,feng2020solving,anthony2017thinking} improves the Monte Carlo Tree Search (MCTS), which has the advantage to suppress the noise in estimates of the solution costs. These methods proposed there are exclusive to MCTS and do not apply to variants of the forward search algorithm. Similarly, a lot of works~\cite{shen2020learning,toyer2020asnets,zhang2020extending,groshev2017learning,chrestien2021,ferber2020neural,bhardwaj2017learning} investigate architectures of neural networks implementing the heuristic function $h(s,\theta)$, ideally for arbitrary planning problems. We use~\cite{shen2020learning,chrestien2021} to implement $h(s,\theta),$ but remind our readers that our interest is the loss function minimized when optimizing $\theta.$

Finally, \cite{agostinelli2019solving} proposes to construct a training set similarly to pattern databases~\cite{Edelkamp06} by random backward movements from the goal state and to adapt A* for multi-core processors by expanding $k>1$ states in each iteration. These improvements are independent of those proposed here and can be combined.

\section{Experiments}
The experimental protocol was designed to compare the loss functions while removing all sources of variability. It therefore uses imitation learning in planning~\cite{reed2016learning,groshev2017learning} and leaves more interesting bootstrap protocol for future work~\cite{arfaee2011learning}.
The goal was not to deliver state-of-the-art results, therefore the classical solvers were omitted, but their results are in the Appendix for completeness.

The proposed ranking losses are instantiated with the logistic loss function as in Equation~\eqref{eq:logistic} for A* search by setting $\alpha = \beta = 1$ in~\eqref{eq:rfun}, called \starloss{}, and for GBFS, by setting $\alpha = 0$, $\beta = 1$ in~\eqref{eq:rfun}, called $\gbfsloss.$  $\loss_2$ is used as defined in Equation~\eqref{eq:costtogo}. Since the heuristics used in this paper are not admissible, A* search will not provide optimal solutions. It was included in the comparison to demonstrate that heuristics optimized with respect to $\gbfsloss$ might be inferior in A* search. To observe the advantage of ranking over regression, we defined a $\lrtloss$ loss function
\begin{equation}
\lrtloss(h, \Gamma, \pi) =  \sum_{(s_{i-1},s_i) \in \pi} \log(1 + \exp(h(s_{i},\theta) - h(s_{i-1},\theta)))
\end{equation}
comparing only states on the optimal trajectory.
We also compare to the loss function proposed in~\cite{staahlberg2022learning} defined as $\bellmanloss(h, \Gamma, \pi) =  \sum_{s \in \mathcal{S}^{\pi}} \max\{1 + \min_{s' \in \mathcal{N}(s)} h(s',\theta) - h(s,\theta)\ , 0\} + \max\{0, h^*(s) - h(s,\theta)\} + \max.\{0, h(s,\theta) - 2h^*(s)\},$ where $\mathcal{N}(s)$ denotes child-nodes of state $s.$  Furthermore, we compare to the policy-guided heuristic search~\cite{orseau2021policy} (denoted as $\levinloss$) with GBFS modified for efficiency as described in the paper while simultaneously adapting the neural network to provide the policy and heuristic value.

The heuristic function $h(s,\theta)$ for a given problem domain was optimized on the training set containing problem instances and their solutions, obtained by SymBA* solver~\cite{torralba2014symba} for Sokoban and Maze with teleports and ~\cite{seippfast} \footnote{runs the "LAMA 2011 configuration" of the planner} for Sliding Tile. Each minibatch in SGD contained states from exactly a single problem instance needed to calculate the loss. This means that for $\loss_2,$ \lrtloss{}, \levinloss{}, only states on the solution trajectory were included; for \starloss{}, \gbfsloss{}, and \bellmanloss{} there were additional states of distance one (measured by the number of actions) from states on the solution trajectory. Optimized heuristic functions were always evaluated within A* and GBFS forward searches, such that we can analyze how the heuristic functions specialize for a given search. To demonstrate the generality, experiments contained eight problem domains of two types (grid and general PDDL) using two different architectures of neural networks described below.

\paragraph{Neural network for grid domains}
States in \emph{Sokoban}, \emph{Maze} with teleports, and \emph{Sliding-puzzle} can be easily represented in a grid (tensor). The neural network implementing the heuristic function $h(s,\theta)$ was copied from~\cite{chrestien2021}. Seven convolution layers, $P_{1}\ldots P_{7}$ of the same shape $3\times3$ with 64 filters are followed by four CoAT blocks (CoAT block contains a convolution layer followed by a multi-head attention operation with 2 heads and a positional encoding layer). Convolution layers in CoAT blocks have size $3\times 3$ and have 180 filters. The network to this stage preserves the dimension of the input. The output from the last block is flattened by mean pooling along $x$ and $y$ coordinates before being passed onto a fully connected layer (FC) predicting the heuristic. All blocks have skip connections. 

\paragraph{Neural network for PDDL domains}
For \emph{Blocksworld}, \emph{Ferry}, \emph{Spanner}, \emph{N-Puzzle}, and \emph{Elevators}, where the state cannot be easily represented as a tensor, we have used Strips-HGN~\cite{shen2021stripshgn} and we refer the reader therein for details. Strips-HGN was implemented in Julia~\cite{bezanson2017julia} using \texttt{PDDL.jl}~\cite{zhi2022pddl} extending~\cite{mandlik2022jsongrinder}. Importantly, we have used the hyper-graph reduplication proposed in~\cite{sourek2018lifted} to improve computational efficiency. 
Since we did not know the precise architecture (unlike in the grid domain), we performed a grid-search over the number of hyper-graph convolutions in $\{1,2,3\},$ dimensions of filters in $\{4,8,16\},$ and the use of residual connections between features of vertices. The representation of vertices was reduced by mean and maximum pooling and passed to a two-layer feed-forward layer with hidden dimension equal to the number of filters in hyper-graph convolutions. All non-linearities were of relu type. The best configuration was selected according to the number of solved problems in the validation set. Forward search algorithms were given 5s to solve each problem instance chosen in a manner to emphasize differences between loss functions. All experiments were repeated 3 times. All experiments on the PDDL domains including development took 17724 CPU hours. Optimizing heuristic for a single problem instance took approximately 0.5 CPU/h except Gripper domain, where the training took about 8h due to large branch factors. The rest of this subsection gives further details about domains.

\paragraph{Sokoban} The training set for the Sokoban domain contained 20000 mazes of grid size $10 \times 10$ with a single agent and 3 boxes. The testing set contained 200 mazes of the same size $10 \times 10,$ but with 3, 4, 5, 6, 7 boxes. Assuming the complexity of to be correlated with the number of boxes, we can study generalization to more difficult problems. The problem instances were generated by \cite{SchraderSokoban2018}.
\paragraph{Maze with teleports} The training set contained 20000 mazes of size $15\times15$ with the agent in the upper-left corner and the goal in the lower-right corner. The testing set contained 200 mazes of size $50\times50,$ $55\times55,$ and $60\times60.$ Therefore, as in the case of Sokoban, the testing set was constructed such that it evaluates the generalization of the heuristic to bigger mazes. The mazes were generated using an open-source maze generator~\cite{smith2022maze}, where walls were randomly broken and 4 pairs of teleports were randomly added to the maze structure. 
\paragraph{Sliding puzzle} The training set contained 20000 puzzles of size $5\times5$. The testing set contained 200 mazes of size $5 \times 5$. These puzzles were taken from ~\cite{hlevin}. Furthermore, we tested our approach on puzzles of higher dimensions such $6 \times 6$ and $7 \times 7$, all of which were generated with~\cite{hlevin}. Thus, in total, the testing set contained 200 sliding tile puzzles for each of the three dimensions.
\paragraph{Blocksworld, N-Puzzle, Ferry, Elevators, and Spanner} The problem instances were copied from~\cite{shen2021stripshgn} (Blocksworld, N-Puzzle, and Ferry) and from~\cite{planningdomains} (\texttt{elevators-00-strips}). Problem instances of Spanner were generated by~\cite{planninggenerators}. Blocksworld contained 732 problem instances of size 3--15, N-Puzzle contained 80, Ferry contained 48, Elevators contained 80, and Spanner contained 440. Table~\ref{tab:bfs} in Appendix shows the fraction of -olved mazes by breadth-first search for calibration of the difficulty. The problem instances were randomly divided into training, validation, and testing sets, each containing 50\% / 25\% / 25\% of the whole set of problems. The source code of all experiments is available at \url{https://github.com/aicenter/Optimize-Planning-Heuristics-to-Rank} with MIT license.

\begin{table*}
  \centering
\resizebox{\textwidth}{!}{
\begin{tabular}{lrrrrrrrrrrrrr}
\setlength{\tabcolsep}{3pt}
&& \multicolumn{5}{c}{A*} && \multicolumn{6}{c}{GBFS}  \\
\cline{3-7}
\cline{9-14}
problem & complx.&\starloss  &\gbfsloss & $\lrtloss$&  $\loss_2$ & \bellmanloss && \starloss & \gbfsloss&  $\lrtloss$ & $\loss_2$ & \bellmanloss & \levinloss\\
\midrule
  Blocks && \textbf{100} & \textbf{100} & \textbf{100} & 99 & \textbf{100} &   & \textbf{100} & \textbf{100} & \textbf{100} & \textbf{100} & \textbf{100} & 99 \\
  Ferry && 98 & 98 & \textbf{100} & 92 & \textbf{100} &   & 98 & \textbf{100} & \textbf{100} & \textbf{100} & 98 & 98 \\
  N-Puzzle && \textbf{89} & 87 & 88 & 83 & \textbf{89} &   & \textbf{92} & 89 & 89 & 89 & \textbf{92} & 88 \\
  Spanner && \textbf{100} & 89 & \textbf{100} & 84 & 92 &   & \textbf{100} & \textbf{100} & \textbf{100} & \textbf{100} & \textbf{100} & \textbf{100} \\
  Elevators && \textbf{91} & 85 & 75 & 36 & 66 &   & \textbf{92} & 85 & 79 & 76 & 67 & 58 \\
\midrule
Sokoban  &   3 boxes  &  \textbf{99} &  98 &  96 &   97 & 92 &&  98 &   \textbf{100} &   94 &    95& 92& 98\\
 		 &   4 boxes  &  \textbf{89}   &       89 &      85&  81  & 82&&      87&       \textbf{91} &   84 &    83& 84& 84 \\
    	 &   5 boxes  &  \textbf{80}  &       75 &      72&      72 & 73 &&       \textbf{78}&       77 &      74 &    72& 72& 73\\
   		 &   6 boxes  &  \textbf{76}  &       69 &      59&      51 & 53&&       \textbf{73} &       71 &      56 &    51& 54& 64\\
     	 &   7 boxes  &  \textbf{55}  &       49 &      47&      42 & 45&&       \textbf{51} &      49 &      48 &    43& 45& 49\\
\midrule
Maze w. t.  &  $50 \times 50$   & \textbf{92}  &   91   &  88   &   87  & 87 &&    89    &   \textbf{90}     &   89    & 84   & 85& 89\\
 		&  $55 \times 55$   & \textbf{78}  &  75    &  73   &   72    & 74 &&    74     &   \textbf{75}     &   74    & 72   & \textbf{75}&  74\\ 
   		&  $60 \times 60$ & \textbf{49}  &   37   &  35   &   32  & 31 &&    42     &     \textbf{48}   &   36    &  34  & 32&  42\\
\midrule
Sliding puzzle & $5 \times 5$ & \textbf{88} & 83 & 84 & 80 & 82 &&86 & \textbf{87} & 84 & 84 & 84 & 85\\ 
               & $6 \times 6$ & \textbf{51} & 48 & 49 & 45 & 46 &&47 & \textbf{49} & 45 & 43 & 46 & 48\\   
               & $7 \times 7$ & \textbf{39} & 35 & 36 & 32 & 34 &&35 & \textbf{36} & 35 & 32 & 34 & 35\\  
     
\bottomrule
\end{tabular}}
\caption{\label{tab:solvedfraction}Fraction of solved problem instances in percent from the testing set. The top row denotes the type of the search and the second top row denotes the loss function against which the heuristic function was optimized. 
The complexity is explicitly shown for grid domains. Standard deviations are most of the times smaller than one percent and are provided in Table in the Appendix.}
\end{table*}
\subsection{Experimental results}
Table~\ref{tab:solvedfraction} shows the fraction of solved mazes in percent for all combinations of search algorithms (A* and GBFS) and heuristic functions optimized against compared loss functions. The results match the above theory, as the proposed ranking losses are \emph{always better} than the $\loss_2$ regression loss.

For A* search, the heuristic function optimized against the \starloss{} is clearly the best, as it is only once inferior to \lrtloss{} and \bellmanloss{} on the Ferry domain. As expected, heuristic functions optimized with respect to \gbfsloss{} behave erratically in A* search, because they were optimized for a very different search. 

For GBFS, heuristic functions optimized against the \starloss{} or against \gbfsloss{} behave the best. It can be shown (the proof is provided in Apendix) that since each action has a constant and positive cost, the heuristic function optimizing \starloss{} optimizes \gbfsloss{} as well, but the opposite is not true.

Heuristic functions optimized against $\loss_2$ estimating cost-to-goal deliver most of the time the worst results. Heuristic functions optimized against other ranking losses~\lrtloss{} and~\bellmanloss{} frequently perform well, most of the time better than $\loss_2$, but sometimes (Elevators, Sokoban, Sliding puzzle) are much worse than proposed losses. These experiments further confirm the~\cite{wilt2012does,reed2016learning,xu2009learning} stating that optimizing the exact estimate of cost-to-goal is unnecessary, but they also show the advantage of including states off the optimal path in the loss function. Finally, the method combining policy and heuristic in GBFS, $\levinloss,$~\cite{orseau2021policy} performs better that $\loss_2,$ but worse than the proposed ranking losses. Since both approaches are derived from the same motivation, we attribute this to difficulties in training neural networks with two heads. Ranking losses optimize just the heuristic which is directly used in the search, which seems to us to be simpler. Additional statistics, namely the average number of expanded states (Table~\ref{tab:average}) and the average length of plans (Table~\ref{tab:solution}) are provided in the Appendix.

\section{Conclusion}
The motivation of this paper stemmed from the observation that even the cost-to-goal, considered to be an optimal heuristic, fails to guarantee a strictly optimally efficient search. Since a large body of existing research optimizes this quantity, we are effectively lost with respect to what should be optimized. To fill this gap, we have stated the necessary and sufficient conditions guaranteeing the forward search to be strictly optimally efficient. These conditions show that the absolute value of the heuristic is not important, but that the ranking of states in the Open list is what controls the efficiency. Ranking can be optimized by minimizing the ranking loss functions, but its concrete implementation needs to correspond to a variant of the forward search. In case of mismatch, the resulting heuristic can perform poorly, which has been demonstrated when the heuristic optimized for BGFS search was used with A* search. The other benefit of ranking losses is that from the point of view of statistical learning theory, they solve a simpler problem than ubiquitous regression in estimating the cost-to-goal.

The experimental comparisons on eight problem domains convincingly support the derived theory. Heuristic functions optimized with respect to ranking loss \starloss{} instantiated for A* search perform almost always the best with A* search (except for one case where it was the second best). In the case of GBFS, heuristic functions optimizing ranking losses \gbfsloss{} and \starloss{} instantiated for GBFS and A* worked the best. This is caused by the fact that the heuristic optimizing \starloss{} for A* optimizes \gbfsloss{} for GBFS as well.

We do not question the existence of a strictly optimally efficient heuristic. Given our experiments, we believe that if the space of heuristic functions over which the loss is optimized is sufficiently rich, the result will be sufficiently close to the optimal for the needs of the search.
\subsection{Acknowledgements} This work has been supported by project numbers 22-32620S and 22-30043S from Czech Science Foundation and OP VVV project CZ.02.1.01/0.0/0.0/16\_019/0000765 "Research Center for Informatics". This work has also received funding from the European Union’s Horizon Europe Research and Innovation program under the grant agreement TUPLES No 101070149. 
%


\bibliographystyle{plain} 
\bibliography{main}

\newpage
\section{Appendix}

\subsection{Speed of convergence against the size of training set:}
We use the classical results of statistical learning theory to show that estimating the cost-to-go converges more slowly with respect to the size of the training set than estimating the rank. 

From Equation (1) in the main text it should be obvious that ranking is in its essence a classification problem if one considers a pair of states $(s,s')$ as a single sample.
For classification/ranking problems, a following bound on true error rate~\cite{vapnik1991principles} holds with probability $1 -\eta$
\begin{equation*}
{R}_c \leq\hat{R}_c + \sqrt{\frac{1}{n_p}\left[\left(1+\kappa\ln\frac{2n_p}{\kappa}\right) - \ln{\eta}\right]},
\end{equation*}
where ${R}_c$ denotes the true loss (Equation (1) in the main text) and $\hat{R}_c$, its estimate from $n_p$ state pairs $(s_i,s_j),$ and $\kappa$, the Vapnik-Chervonenkis dimension~\cite{vapnik1991principles} of the hypothesis space.

Optimizing $h(s,\theta)$ with respect to cost-to-goal (Equation (5) in the main text) is a regression problem for which a different generalization bound on prediction error~\cite{cherkassky1999model}  holds with probability $1 -\eta$
\begin{equation*}
R_r \leq \hat{R}_r\left[1 - \sqrt{\frac{1}{n_s}\left[\kappa\left(1 + \ln \frac{n_s}{\kappa}\right) - \ln \eta\right]}\right]^{-1}_+
\end{equation*}
where again $R_r$ is the error of the estimator of cost-to-go and $\hat{R}_r$ is its estimate from $n_s$ states (Equation (5) in the main text),\footnote{The formulation in~\cite{cherkassky1999model} contains constants $c$ and $a,$ but authors argue they can be set to $a = c = 1$ and hasubve been therefore omitted here.} and $[x]_+$ is a shorthand for $\max\{0,x\}.$

From the above, we can see that excess error in the ranking case converges to zero at a rate $\frac{\sqrt{\ln n_p}}{\sqrt{n_p}},$ which is slightly faster than that of regression $\frac{\sqrt{\ln n_s}}{\sqrt{n_s} - \sqrt{\ln n_s}}.$  But, the number of state pairs in the training set grows quadratically with the number of states; therefore, the convergence rate of the ranking problem for the number of states $n_s$ can be expressed as $\frac{\sqrt{2\ln n_s}}{n_s},$ which would be by at least $\frac{1}{\sqrt{n_s}}$ factor faster than that of regression. We note that bounds are illustrative, since the independency of samples is in practice violated, since samples from the same problem-instance are not independent.

\subsection{Suboptimality of perfect heuristic in GBFS}
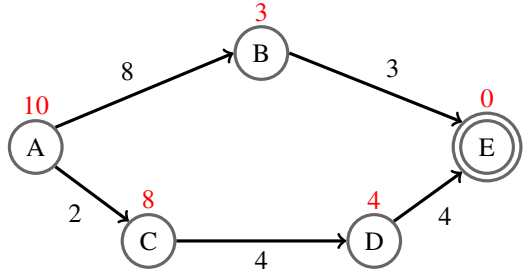
\begin{figure}[h!]
\begin{center}
\tikzset{every picture/.style={line width=0.75pt}} 
\begin{tikzpicture}[scale = 0.5,
roundnode/.style={circle, draw=black!60, very thick, minimum size=7mm},
roundnodeb/.style={circle, draw=black!60, very thick, minimum size=9mm},
]
\node[roundnode] (a) at (0,0) {A};
\node[red] at (0,1.1) {10};
\node[roundnode] (b) at (6,2.5) {B};
\node[red] at (6,3.6) {3};
\node[roundnode] (c) at (3,-2.5) {C};
\node[red] at (3,-1.4) {8};
\node[roundnode] (d) at (9,-2.5) {D};
\node[red] at (9,-1.4) {4};
\node[roundnodeb] (e) at (12,0) {};
\node[red] at (12,1.3) {0};
\node[roundnode] at (12,0) {E};
\draw[very thick, ->] (a.south east) -- node[anchor=north east] {2} (c.north west);
\draw[very thick, ->] (a.north east) -- node[anchor=south east] {8} (b.west);
\draw[very thick, ->] (c.east) -- node[anchor=north] {4} (d.west);
\draw[very thick, ->] (b.east) -- node[anchor=south west] {3} (e.north west);
\draw[very thick, ->] (d.north east) -- node[anchor=north west] {4} (e.south west);
\end{tikzpicture}
\caption{\label{fig:perfectheuristic}Problem instance where perfect heuristic is not strictly optimally efficient with GBFS. Numbers on the edges denote the cost of action and red numbers next to nodes denote the minimal cost-to-go.}
\end{center}
\end{figure}

\begin{example}
\label{ex:optimality}
While cost-to-goal \hstar\ is the best possible heuristic for algorithms like A* (up to tie-breaking)  in terms of nodes being expanded, for GBFS, \hstar{} does not necessarily yield optimal solutions. 
\end{example}

\begin{proof} 
A* explores all the nodes with $f(s) < f^*(s) = g(s)+\hstar(s)$ and some with $f(s) = f^*(s) = g(s)+\hstar(s)$, so nodes can only be saved with optimal 
$f^*(s)=g(s) +\hstar(s)$. In fact, when given \hstar as the
heuristic, \emph{only} nodes $s$ with $f(s)=f^*(s)$
are expanded. Depending on the strategy of tie-breaking, the solution path can be found in the minimal number of node expansions or take significantly longer (e.g., in lower $g$-value first exploration of the search frontier).
Any heuristic other than \hstar\ is either overestimating, and, therefore, may lead to either non-optimal solutions in A*, or weaker than \hstar, leading to more nodes being expanded.  

Even if \hstar\ is given, GBFS is not guaranteed to be optimal. Consider the following graph with five nodes $A,B,C,D,E$, and 
$w(A,B) = 8$, $w(B,E) = 3$, $w(A,C) = 2$, $w(C,D) = 4$, $w(D,E) = 4$,
and $\hstar(A) = 10$, $\hstar(B) = 3$, $\hstar(C) = 8$, $\hstar(D) = 4$, $\hstar(E) = 0$ (see Figure~\ref{fig:perfectheuristic}), initial node $s_0=A$, goal node $E \in {\cal S^*}$. The numbers are the actual costs and the red numbers are the exact heuristic function. For finding a path from node $A$ to node $E$, GBFS would return $(A,B,E)$ following the heuristic function. However, the path $(A,C,D,E)$ has cost $10$ instead of $11$. 
\end{proof}

\subsection{Theory}
Below, we prove the claim in made in Experimental section stating that if a heuristic $h$ is strictly optimally efficient for A* search, then it is also strictly optimally efficient for GBFS. 
\begin{theorem}
Let a heuristic $h$ is a perfect ranking for A* search on a problem instance $\Gamma = (\langle \mathcal{S}, \mathcal{E}, w \rangle, s_0, \mathcal{S}^*)$ with a constant non-negative cost of actions $\left(\left(\exists c \ge  0\right)\left(\forall e \in \mathcal{E} \right)\left(w(e) = c\right)\right)$. Then $h$ is a perfect ranking for GBFS on $\Gamma.$
\end{theorem}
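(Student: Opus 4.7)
The plan is to use the witness optimal path $\pi$ from the A* perfect-ranking hypothesis directly as the witness for the GBFS perfect-ranking condition, then exploit the constant-cost assumption to drop the $g$-term from the inequality.

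First I would unpack the definitions. The function $g(s)$ in Definition~\ref{def:perfect} is defined with respect to the tree obtained by expanding only states on $\pi$, so it is independent of whether the search is A* or GBFS; the same is true of the Open lists $\mathcal{O}_i$ and the sub-path sets $\mathcal{S}^{\pi_{:i}}$. Thus the witness plan $\pi$ for A*, together with its induced $g$-values and Open lists, is a candidate witness for GBFS as well, and the only thing left to verify is the strict inequality $h(s_i) < h(s_j)$ for every $i \in \{1,\dots,l\}$ and every $s_j \in \mathcal{O}_i \setminus \mathcal{S}^{\pi_{:i}}$.

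Next I would bound $g(s_j)$ using constant cost. Since $w(e) = c$ for every edge, every state at tree-depth $d$ satisfies $g = d\cdot c$. The path state $s_i$ sits at depth exactly $i$, giving $g(s_i) = i c$. Any off-path $s_j \in \mathcal{O}_i \setminus \mathcal{S}^{\pi_{:i}}$ was generated as a child of some already-expanded state $s_k$ with $k \le i-1$, hence $s_j$ has tree-depth $k+1 \le i$ and $g(s_j) = (k+1) c \le i c = g(s_i)$. Combined with the A* perfect-ranking inequality $g(s_i) + h(s_i) < g(s_j) + h(s_j)$, rearranging gives
\begin{equation*}
h(s_i) - h(s_j) < g(s_j) - g(s_i) \le 0,
\end{equation*}
which is the desired GBFS perfect-ranking condition. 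The edge case $c=0$ makes the bound trivial (both $g$-values are zero) and is handled by the same chain. By Theorem~\ref{th:necsuff} applied to GBFS ($\alpha=0$, $\beta=1$), this is equivalent to $h$ being a perfect ranking for GBFS on $\Gamma$.

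The argument is essentially a one-line algebraic manipulation, so the only place requiring care is the bookkeeping claim that $g$ and $\mathcal{O}_i$ do not change when we reinterpret the search as GBFS; I would state this explicitly and appeal to the definition, which ties these quantities to the search tree induced by expanding $\pi$ rather than to the selection rule. The constant-cost hypothesis is clearly used in the depth-to-$g$ conversion, and the proof suggests that the conclusion fails without it: once actions can have different costs, an off-path state could have a larger $g$ than $s_i$, and the slack $g(s_j) - g(s_i) > 0$ would no longer force $h(s_i) < h(s_j)$ from the A* inequality alone.
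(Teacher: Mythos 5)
Your proof is correct and rests on exactly the same key observation as the paper's: under constant action costs every off-path state $s_j$ in $\mathcal{O}_i$ satisfies $g(s_j) \le g(s_i)$, so the $g$-terms can be dropped from the A* inequality $g(s_j)+h(s_j) > g(s_i)+h(s_i)$ to obtain $h(s_j) > h(s_i)$. The only difference is organizational: the paper phrases this as an induction over expanded states with a two-case split (previously generated states versus fresh children of $s_{i-1}$), whereas you handle both cases at once with the uniform depth-to-$g$ argument, which is arguably cleaner since the paper's induction hypothesis is never actually invoked.
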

\begin{proof}
Let $\pi = ((s_0,s_1),(s_1,s_2),\ldots,(s_{l-1},s_l))$ be an optimal plan such that 
$\forall i \in \{1,\ldots,l\}$ and $\forall s_j \in \left\{s_j \mid \exists (s_k,s_j) \in \mathcal{E} \wedge s_k \in \mathcal{S}^{\pi_{:i-1}} \wedge s_j \notin \mathcal{S}^{\pi_{:i}}\right\}$ we have $g(s_j) + h(s_j) > g(s_i) + h(s_i),$ where $g(s)$ is the distance from $s_0$ to $s$ in a search-tree created by expanding only states on the optimal path $\pi.$
We want to proof that if all actions have the same positive costs, then $h(s_j) > h(s_i)$ as well.

We carry the proof by induction with respect to the number of expanded states.

At the initialization (step 0) the claim trivially holds as the Open list contains just a root node and the set of inequalities is empty.

Let's now make the induction step and assume the theorem holds for the first $i-1$ step.
We divide the proof to two parts. At first, we prove the claim for $\mathcal(O)_i \setminus \mathcal{N}(s_{i-1})$ and then we proof the claim for $\mathcal{N}(s_{i-1}),$ where $\mathcal{N}(s)$ denotes child states of the state $s.$ 

1) Assume $s_{j} \in \mathcal(O)_i \setminus \mathcal{N}(s_{i-1}).$ Since $h$ is strictly optimally efficient for A*, it holds that
\begin{alignat*}{2}
g(s_j) + h(s_j) > &  g(s_i) + h(s_i)\\
h(s_j) > &  (g(s_i) - g(s_j)) + h(s_i)\\
h(s_j) > &  h(s_i),\\
\end{alignat*}
where the last inequality is true because $g(s_i) - g(s_j) \ge  0.$

Assume $(s_j \in \mathcal{N}(s_{i-1}))(s_j \neq s_i).$ Since $h$ is strictly optimall efficient for A*, it holds that
\begin{equation}
g(s_j) + h(s_j) > g(s_i) + h(s_i).
\end{equation}
Since $g(s_j) = w((s_{i-1}, s_j)) = w((s_{i-1},s_i)) = g(s_i),$ it holds
\begin{equation}
h(s_j) > h(s_i),
\end{equation}
which finishes the proof of the theorem.
\end{proof}

\subsection{Optimally efficient heuristic might not exists for GBFS}
\label{sec:optimality_of_gbfs}
\begin{figure}[h!]
\begin{center}
\tikzset{every picture/.style={line width=0.75pt}} 
\begin{tikzpicture}[scale = 0.5,
roundnode/.style={circle, draw=black!60, very thick, minimum size=7mm},
roundnodeb/.style={circle, draw=black!60, very thick, minimum size=9mm},
]
\node[roundnode] (a) at (0,0) {A};
\node[red] at (0,1.1) {0};
\node[roundnode] (b) at (3,3.5) {B};
\node[red] at (3,4.6) {1};
\node[roundnode] (c) at (3,-3.5) {C};
\node[red] at (3,-4.6) {1};
\node[roundnode] (d) at (8,0) {D};
\node[red] at (9.2,0) {2};
\node[roundnode] at (0,0) {A};

\draw[very thick, ->] (a.south east) -- node[anchor=north east] {1} (c.north west);
\draw[very thick, ->] (a.north east) -- node[anchor=south east] {1} (b.south west);
\draw[very thick, ->] (a.east) -- node[anchor=north west] {9} (d.west);
\draw[very thick, ->] (b.south) -- node[anchor=north west] {9} (c.north);
\draw[very thick, ->] (b.south east) -- node[anchor=north east] {1} (d.north west);
\end{tikzpicture}
\caption{\label{gbfsheuristic}Problem instance where optimally efficient heuristic does not exists for GBFS.}
\end{center}
\end{figure}
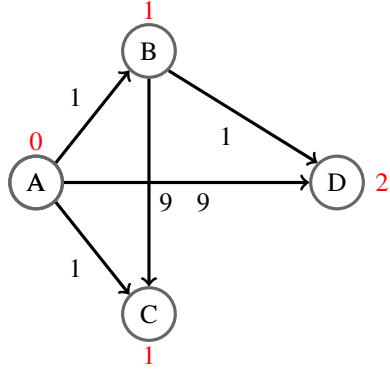
Consider the following graph in Figure ~\ref{gbfsheuristic} with four nodes $A,B,C,D,$, and 
$w(A,B) = 1$, $w(B,D) = 1$, $w(A,C) = 1$, $w(A,D) = 9$, $w(B,C) = 9$,
and $\hstar(A) = 0$, $\hstar(B) = 1$, $\hstar(C) = 1$, $\hstar(D) = 2$ where A is the goal state and D is the initial state.

 We can see that for GBFS, the perfect heuristic does not exist for D. On expansion, the GBFS algorithm will put A and B to the open list with heuristic values h(A) = 0 and h(B) = 1. GBFS takes A from the open list and checks if it is a goal state. Since A is goal state, GBFS terminates returning path (D,A) as a solution. However, this is not the optimal path as a better (optimal) solution exists (D, B, A). Since the definition of optimal ranking requires the inequalities to hold for this optimal path, in GBFS, the perfect heuristic does not exist for all initial states. 

 The problem can be fixed if the definition of optimal ranking is changed to consider two cases in the merit function $f(s) = \alpha g(s) + \beta h(s)$: $\alpha > 0$ and $\beta > 0$ and $\alpha = 0$ and $\beta > 0$ . In the first case, the optimal ranking should be defined with respect to the "optimal path" (this is the A*); in the latter case, it should be the path with minimal number of expansions. With GBFS, the user simply wants to find a solution but not care about its optimality. With this change, the heuristic function will exist for Figure ~\ref{gbfsheuristic}.

\subsection{Training set with multiple solution paths with the same length of the plan}
\label{sec:multiplepath}
The behavior of the learned heuristic function depends on the composition of the training set, which is illustrated below on a simple grid problem. The agent starts at position (4,4) and has to move to the goal position (0,0). There are no obstacles and the agent can only move one tile down or one tile left (the effects on his positions are either (-1,0) or (0,-1)), the cost of the action is one. The number of different solutions path grows exponentially with the size of the grid and all solution path has the same cost (for problem of size 5x5, there are 70 solutions). This problem is interesting, since merit function in A* with optimal heuristic function (cost to goal) is constant, equal to 8, for all states.

For the size of the grid (4,4), the heuristic is determined by a table with 25 values. Below, these values are determined by minimizing the proposed loss for A* algorithm with logistic loss surrogate by BFGS (with all zeros as initial solution). Since the loss function is convex, BFGS finds the solution quickly.

In the first case, the training set contains one solution path ([4, 4], [3, 4], [2, 4], [1, 4], [0, 4], [0, 3], [0, 2], [0, 1], [0, 0]), where the agent first goes left and then down. The learnt heuristic values h is shown in Table~\ref{firsttable}.
\begin{table}[h!]
\begin{center}
\begin{tabular}{ |c|c|c|c|c|c| } 
 \hline
 y/x & 0 & 1 & 2 & 3 & 4 \\ 
 \hline
4 & -191.53 & -131.99 & -76.94 & -31.25 &   0.0 \\ 
3 & 0.0 &   31.25 &  76.94 & 131.99 & 191.53  \\ 
2 & 0.0 &    0.0 &   0.0   &   0.0 &    0.0 \\ 
1 & 0.0 &    0.0 &   0.0   &   0.0 &    0.0  \\ 
0 & 0.0 &    0.0 &   0.0   &   0.0 &    0.0  \\ 
 \hline
 
\end{tabular}
\caption{\label{firsttable}Table showing one solution path.}
\end{center}
\end{table}

An A* search using these heuristic values will first always take action moving the agent to left and then down. When moving down, the agent does not have another choice. Notice that the heuristic values of many states are not affected by the optimization, because they are not needed to effectively solve the problem.

In the second case, the training set contains two solution paths: the first is in Table ~\ref{firsttable} and in the second table ~\ref{secondtable}, the agent goes first down and then left. The learnt heuristic values are shown in ~\ref{secondtable}.

\begin{table}[h!]
\begin{center}
\begin{tabular}{ |c|c|c|c|c|c| } 
 \hline
 y/x & 0 & 1 & 2 & 3 & 4 \\ 
 \hline
4 & -95.76 & -66.0 & -38.47 &  80.14 &    0.0 \\ 
3 & 0.0 & 15.62 &  38.47 & 131.99 &  80.14  \\ 
2 &  0.0 &   0.0 &    0.0 &  38.47 & -38.47\\ 
1 &  0.0 &   0.0 &    0.0 &  15.62 &  -66.0 \\ 
0 &  0.0 &   0.0 &    0.0 &   0.0 & -95.76  \\ 
 \hline
 
\end{tabular}
\caption{\label{secondtable}Table showing two solution paths.}
\end{center}
\end{table}
An A* search will now face tie at state (4,4), since states (3,4) and (4,3) have the same heuristic value. But the presence of the tie does not affect the optimal efficiency of the search, as A* will always expand states on one of the optimal path from the training set.

Finally let's consider a case, where all 70 possible solutions are in the training set. The learned heuristic values are shown in Table ~\ref{thirdtable}.

\begin{table}[h!]
\begin{center}
\begin{tabular}{ |c|c|c|c|c|c| } 
 \hline
 y/x & 0 & 1 & 2 & 3 & 4 \\ 
 \hline
4 & 3.74 &  92.62 &  134.66 & 163.17 &    0.0\\ 
3 & -46.61 &   2.35 &   91.93 & 134.38 & 163.17   \\ 
2 &   -167.96 &   -47.7 &  1.94 &  91.93 & 134.66\\ 
1 &   -210.03 & -168.66 & -47.7 &   2.35 &  92.62  \\ 
0 &   0.0 & -210.03 & -167.96 & -46.61 &   3.74   \\ 
 \hline
 
\end{tabular}
\caption{\label{thirdtable}Table showing multiple solution paths.}
\end{center}
\end{table}
Rank of heuristic values "roughly" corresponds to cost to goal. The reason, why some states are preferred over the others despite their true cost-to-goal being the same is that they appear in more solution paths. As shown in Table ~\ref{thirdtable}, the A* search with learnt heuristic values is strictly optimally efficient.

\subsection{Baseline Comparison to breadth-first search}
The fraction of solved problems for breadth-first search (5s time limit as used by solvers in the paper) is shown in Table ~\ref{tab:bfs}.

\begin{table}[h!]
\begin{center}
\begin{tabular}{ |c|c|c| } 
 \hline
 domain & fraction  \\ 
 \hline
blocks & 0.35 \\ 
ferry & 0.31   \\ 
npuzzle & 0.14\\ 
spanner & 0.63   \\ 
elevators & 0.32 \\ 
 \hline
 
\end{tabular}
\caption{\label{tab:bfs} Fraction of solved mazes by breadth-first search.}
\end{center}
\end{table}

\subsection{Training Details}
For the grid domains, ADAM~\cite{kingma2014adam} training algorithm was run for $100\times 20000$ steps for the grid domains.The experiments were conducted in the Keras-2.4.3 framework with Tensorflow-2.3.1 as the backend. While all solvers were always executed on the CPU, the training used an NVIDIA Tesla GPU model V100-SXM2-32GB. Forward search algorithms were given 10 mins to solve each problem instance. 

For the PDDL domains, the training consumed approximately $100$ GPU hours and evaluation consumed 1000 CPU hours.All training and evaluation were done on single-core Intel Xeon Silver 4110 CPU 2.10GHz with a memory limit of 128GB. The training algorithm AdaBelief~\cite{zhuang2020adabelief} was allowed to do $10000$ steps on the CPU. 
We emphasize though, that the training time does not include the cost of creating the training set.


\begin{table}[h!]
\begin{center}
\resizebox{\textwidth}{!}{
\begin{tabular}{rrrrrrrrrrrrrr}
\setlength{\tabcolsep}{3pt}
&& \multicolumn{5}{c}{A*} && \multicolumn{6}{c}{GBFS}  \\
\cline{3-7}
\cline{9-14}
problem & complx.&\starloss  &\gbfsloss & $\lrtloss$&  $\loss_2$ & \bellmanloss && \starloss & \gbfsloss&  $\lrtloss$ & $\loss_2$ & \bellmanloss & \levinloss\\
\midrule
  blocks && 37 & 54 & \textbf{27} & 137 & 54 &   & 33 & \textbf{28} & 29 & 32 & 32 & 127 \\
  ferry && 53 & 43 & 36 & 339 & \textbf{20} &   & 48 & 34 & 31 & 33 & \textbf{20} & 51 \\
  npuzzle && \textbf{294} & 660 & 843 & 1936 & 641 &   & 297 & 311 & 333 & 591 & \textbf{272} & 418 \\
  spanner && 55 & 546 & \textbf{53} & 807 & 416 &   & 61 & 56 & \textbf{53} & 117 & 65 & 148 \\
  elevators && \textbf{33} & 35 & 52 & 657 & 310 &   & \textbf{33} & \textbf{33} & 43 & 115 & 198 & 73 \\
\midrule
Sokoban  &   3 boxes  & \textbf{14}  & \textbf{14}  &  \textbf{14} & 17 & \textbf{14} && \textbf{14} & \textbf{14} & \textbf{14} & \textbf{17} & \textbf{14} & \textbf{14}\\
 		   &  4 boxes  &  \textbf{32} & 35  & 37  & 44 & 35 &&  34 & \textbf{32} & 36 & 43 & 35 & 33 \\
    	   &   5 boxes &  \textbf{61} & 67  & 68  & 72 & 63 && 65 & 62  & 66 & 77 & 64 & \textbf{61}\\
   		 &   6 boxes &  \textbf{171}  & 179  & 180 & 210  & 177 &&  175 & 179 & 181 & 214 & 180 & \textbf{174}\\
     	 &   7 boxes & \textbf{643} & 651  & 653  & 755 & 654 && 645 & 641 & \textbf{640} & 754 & 655 & 643 \\ 
\midrule
Maze w. t. &  $50 \times 50$ &  \textbf{34} & 37  & \textbf{34}  & 41 & 40 &&  34  & \textbf{33} & 35 & 43 & 40 & 35\\
 		&  $55 \times 55$   &  \textbf{51} & 59  & 52  & 63 & 60 &&   54 & \textbf{52} & 55 & 65 & 61 & 58\\
   		&  $60 \times 60$   & \textbf{72}  & 78  & 75  & 83 & 78 &&   73 & \textbf{71} & 77 & 89 & 79 & 84\\
\midrule
Sliding puzzle & $5 \times 5$ & \textbf{1521}  & 1558 & 1534  & 1559 & 1545 &&  \textbf{1524}  & 1539 & 1533 & 1556 & 1544 & 1531\\
               & $6 \times 6$ & \textbf{2322}  & 2353  & 2334  & 2439 & 2388 &&  2321  & \textbf{2326} & 2329 & 2334 & 2329 & 2329\\   
               & $7 \times 7$ & \textbf{3343}  & 3375  & 3347  & 3431 & 3411 && 3421  & \textbf{3356} & 3449 & 3512 & 3448 &  3379\\
\bottomrule             
\end{tabular}}
\caption{\label{tab:average} Average number of expanded states.}
\end{center}
\end{table}

\begin{sidewaystable}[h!] 
\begin{center}
\begin{tabular}{rrrrrrrrrrrrrrrrr}
\setlength{\tabcolsep}{3pt}
&&&&& \multicolumn{5}{c}{A*} && \multicolumn{6}{c}{GBFS}  \\
\cline{6-10}
\cline{12-17}
problem & complx.& SBA*& FDSS&&\starloss  &\gbfsloss & $\lrtloss$&  $\loss_2$ & \bellmanloss && \starloss & \gbfsloss&  $\lrtloss$ & $\loss_2$ & \bellmanloss & \levinloss\\
\midrule
  blocks &&							100& 100 &&  $ 100 \pm 0 $ & $ 100 \pm 1 $ & $ 100 \pm 0 $ & $ 99 \pm 1 $ & $ 100 \pm 0 $ &   & $ 100 \pm 0 $ & $ 100 \pm 0 $ & $ 100 \pm 0 $ & $ 100 \pm 0 $ & $ 100 \pm 0 $ & $ 99 \pm 1 $ \\
  ferry &&							100& 95  && $ 98 \pm 3 $ & $ 98 \pm 3 $ & $ 100 \pm 0 $ & $ 92 \pm 8 $ & $ 100 \pm 0 $ &   & $ 98 \pm 3 $ & $ 100 \pm 0 $ & $ 100 \pm 0 $ & $ 100 \pm 0 $ & $ 98 \pm 4 $ & $ 98 \pm 3 $ \\
  npuzzle &&						100& 89  &&$ 89 \pm 1 $ & $ 87 \pm 2 $ & $ 88 \pm 0 $ & $ 83 \pm 4 $ & $ 89 \pm 1 $ &   & $ 92 \pm 5 $ & $ 89 \pm 1 $ & $ 89 \pm 1 $ & $ 89 \pm 1 $ & $ 92 \pm 7 $ & $ 88 \pm 3 $ \\
  spanner &&						100& 90  && $ 100 \pm 0 $ & $ 89 \pm 2 $ & $ 100 \pm 0 $ & $ 84 \pm 6 $ & $ 92 \pm 6 $ &   & $ 100 \pm 0 $ & $ 100 \pm 0 $ & $ 100 \pm 0 $ & $ 100 \pm 0 $ & $ 100 \pm 0 $ & $ 100 \pm 0 $ \\
  elevators &&					100& 37  && $ 91 \pm 2 $ & $ 85 \pm 10 $ & $ 75 \pm 8 $ & $ 36 \pm 6 $ & $ 66 \pm 3 $ &   & $ 92 \pm 3 $ & $ 85 \pm 11 $ & $ 79 \pm 8 $ & $ 76 \pm 3 $ & $ 67 \pm 4 $ & $ 58 \pm 11 $ \\
 \midrule
Sokoban  &   3 boxes  & 100&  100&& $ 99\pm 0 $ & $ 98 \pm 0 $ & $ 96\pm 0$&  $97\pm 0$& $92 \pm 0$&& $98 \pm 0$& $100\pm 0$ & $94\pm 0$ & $ 95 \pm 0$ & $92 \pm 0$ & $98 \pm 0$\\
 		 &  4 boxes & 100&  81 &&  $89 \pm 2 $ & $ 89 \pm 1 $ & $85 \pm 2 $ & $ 81 \pm 0 $ & $82 \pm 3$ &&  $87 \pm 2 $ & $91 \pm 1$ & $84 \pm 4$ & $83 \pm 3$ & $84 \pm 3$ & $84 \pm 3$ \\
    	 &   5 boxes  	& 97&   67 && $ 80 \pm 1$  & $ 75\pm 2 $& $ 72 \pm 2 $& $72 \pm 1 $& $73 \pm 0 $ && $ 78 \pm 1$ & $ 77 \pm 1 $ & $74 \pm 3$ & $72 \pm 3 $ & $72 \pm 2$ & $73 \pm 1$\\
   		 &   6 boxes  	& 55 &   49 && $76 \pm 2 $  &  $69  \pm 1 $ &  $ 59  \pm 3$&  $ 51 \pm 0 $& $53  \pm 3 $&&  $ 73\pm 1$ & $ 71  \pm 1 $& $   56  \pm 0 $ &  $ 51 2 $& $54  \pm  0 $& $64  \pm 1$\\
     	 &   7 boxes  	& 46&   31 && $55 \pm 4 $  &   $ 49 \pm 3$ &  $ 47 \pm 4$& $ 42 \pm 3 $ & $45 \pm 5$&&   $ 51 \pm 3 $&  $ 49 \pm 3 $ & $ 48 \pm 3 $& $ 43 \pm 2 $& $45 \pm 2$& $49 \pm 3$\\
\midrule
Maze w. t.  &  $50 \times 50$ &92 & 75 && $92  \pm 0 $& $  91  \pm 0$ & $ 88  \pm 1$ & $  87  \pm 1$ &$ 87  \pm 0$ &&  $  89  \pm  0$ &   $90   \pm 1 $ & $  89   \pm 1 $& $84  \pm 0$ & $85  \pm 1$ & $89  \pm 0$\\
 		&  $55 \times 55$  &52 & 50 && $ 78  \pm 1$& $ 75  \pm 4 $ & $ 73  \pm  1 $& $  72  \pm 3 $ & $74  \pm 3 $&& $74 \pm 0 $& $ 75   \pm  3 $&  $ 74  \pm 2 $ &$ 72  \pm 3$ &$ 75  \pm 3$& $ 74  \pm 2$\\ 
   		&  $60 \times 60$ &	0 & 0  && $49  \pm 1 $&  $ 37  \pm 0$ & $ 35  \pm 3 $ & $ 32  \pm  2 $ & $ 31 \pm 1 $&& $  42  \pm  3 $ & $  48  \pm  1$ &   $36   \pm 3$ &  $34  \pm 3 $ & $ 32  \pm 2$ & $ 42  \pm 0$ \\
\midrule
Sliding puzzle & $5 \times 5$ &-	& 1&& $ 88 \pm 1 $& $83 \pm 3$& $84\pm 3$ & $80 \pm 2$& $82 \pm 0 $&&$86 \pm 3$& $87 \pm 1$&$ 84 \pm 0$&$ 84 \pm 1 $& $84 \pm 1$ & $85 \pm 1$\\ 
               & $6 \times 6$ &-	& -&&$ 51 \pm 2 $& $48 \pm 3 $& $49 \pm 2 $& $45 \pm 4 $&$ 46 \pm 3 $&& $47 \pm 3$ & $49 \pm 3$ &$ 45 \pm 2 $&$ 43 \pm 3$& $66 \pm 2$ & $48 \pm 3$\\   
               & $7 \times 7$ &-	& -&& $39 \pm 3 $&$ 35 \pm 3$ & $36 \pm 3 $&$ 32 \pm 3 $& $34 \pm 3 $&& $35 \pm 3 $&$ 36 \pm 3$ &$ 35 \pm 3 $& $32 \pm 3$ &$ 34 \pm 3 $& $35 \pm 3$\\
 \bottomrule
\end{tabular}	
\caption{Average fraction of solved problem instances in percent with standard deviation. SBA* and FDSS denotes Fast Downward Stone Soup, They are domain independent planners.}
\end{center}
\end{sidewaystable}

\begin{table}
\begin{center}
\resizebox{\textwidth}{!}{
\begin{tabular}{rrrrrrrrrrrrrr}
\setlength{\tabcolsep}{3pt}
&& \multicolumn{5}{c}{A*} && \multicolumn{6}{c}{GBFS}  \\
\cline{3-7}
\cline{9-14}
problem & complx.&\starloss  &\gbfsloss & $\lrtloss$&  $\loss_2$ & \bellmanloss && \starloss & \gbfsloss&  $\lrtloss$ & $\loss_2$ & \bellmanloss & \levinloss\\
\midrule
  blocks && 21.6 & 22.7 & \textbf{21.4} & 22.5 & 22.9 && \textbf{21.8} & 22.7 & 22.1 & 22.5 & 23.1 & 34.1 \\
  ferry && 16.8 & 16.9 & \textbf{16.7} & 16.8 & 16.8 && \textbf{16.8} & 16.9 & \textbf{16.8} & \textbf{16.8} & \textbf{16.8} & 16.9 \\
  npuzzle && 19.2 & 23.5 & 18.2 & 17.5 & \textbf{17.3} && 37.6 & 36.4 & 39.6 & 35.1 & \textbf{34.1} & 124.6 \\
  spanner && \textbf{49.1} & 49.2 & 49.2 & 49.2 & \textbf{49.1} && \textbf{49.0} & 49.2 & 49.1 & 49.2 & 49.3 & 49.1 \\
  elevators && 16.1 & 16.4 & 14.7 & 17.9 & \textbf{15.8} && 19.4 & 16.6 & \textbf{15.9} & 18.0 & 16.1 & 21.4 \\
\midrule
Sokoban  &  3 boxes  & \textbf{13.1}  & 13.3  & 13.8  & 13.2 & 13.4 && \textbf{13.1} & 13.2 & 13.2 & 13.5 & 13.2 & 13.2\\
 		     &   4 boxes  & 15.4  & \textbf{15.2}  & 16.1  & 15.7 & 16.8 && 16.7 & \textbf{15.1} & 16.1 & 15.6 & 16.3 & 15.8\\
    	 & 5 boxes    &  20.1 & \textbf{19.2}  & 21.3  & 22.1 & 20.9 && \textbf{19.8} & 20.4 & 21.3 & 22.1 & 19.9 & 20.1 \\
   		 &   6 boxes & 29.6  & 29.3  & 27.7  & 28.2 & \textbf{26.8} &&  28.3 & \textbf{28.1} & 29.6 & 29.4 & 29.9 & 30.1  \\
     	 &   7 boxes & 31.9  & \textbf{31.4}  & 35.4 & 33.1 & 34.1 && \textbf{30.1} & 33.3 & 35.2 & 32.7 & 34.0 & 35.5\\ 
\midrule
Maze w. t.  &  $50 \times 50$ & \textbf{24.1}  & 25.3  & 25.1  & 24.3 & 24.3 &&  \textbf{24.3}  & 24.5 & 25.4 & \textbf{24.3} & 25.4 & 24.7 \\
 		&  $55 \times 55$   & 34.1  & \textbf{33.2}  & 35.0  & 34.2 & 33.9 &&  \textbf{33.2} & 33.1  & 34.6 & 34.6 & 36.5 & 36.3\\
   		&  $60 \times 60$ & \textbf{41.2}  & 42.9  & 41.4  & 43.2 & 42.8 && \textbf{42.1}  & 43.6 & 44.2 & 45.2 & 45.3 & 45.1 \\
\midrule
Sliding puzzle & $5 \times 5$ & \textbf{150.1}  & 153.7  & 155.2  & 154.6 & 154.5 &&  154.5  & 153.5 & 153.9 & 155.0 & \textbf{151.1} &152.1 \\
               & $6 \times 6$ &  \textbf{252.3} & 254.2  & 253.8  & 254.8 & 254.0 && \textbf{255.9} & 256.4 & 255.3 & 256.2 & 254.9 & 256.3  \\   
               & $7 \times 7$ &  \textbf{321.1} & 324.1  & 322.2  & 324.3 & 320.4 && \textbf{322.9} & 324.1 & 323.4 & 327.1 & 324.6 & 323.7\\
\bottomrule
\end{tabular}
}
\caption{\label{tab:solution} Average number of length of the solution. The average is computed only over problem instances solved by all $11$ variants of forward search.}
\end{center}
\end{table}

\end{document}